\documentclass[letterpaper, 10 pt, conference]{ieeeconf}  

\overrideIEEEmargins
\pdfminorversion=4

\makeatletter

\let\proof\@undefined
\let\endproof\@undefined
\makeatother

\IEEEoverridecommandlockouts                              

\overrideIEEEmargins                                      



\usepackage{amsmath,amsthm} 
\usepackage{amssymb}  
\usepackage{graphicx}
\usepackage{calc}
\usepackage{subcaption}
\usepackage{url}
\usepackage{booktabs}

\usepackage{enumitem,kantlipsum}
\usepackage{float}
\usepackage{graphics}
\captionsetup{font=small}

\setlength{\topsep}{3pt}
\newtheorem{theorem}{Theorem}[section]

\newtheorem{lemma}[theorem]{Lemma}

\theoremstyle{definition}
\newtheorem{definition}[theorem]{Definition}
\newtheorem{problem}[theorem]{Problem}

\theoremstyle{remark}

\title{\LARGE \bf
Minimal $t-$spanning Primitive Sets: An MILP Formulation
}

\title{\LARGE \bf
Computing a Minimal Set of $t$-Spanning Motion Primitives\\ for Lattice Planners}

\urldef{\smith}\url{stephen.smith@uwaterloo.ca}
\urldef{\botros}\url{alexander.botros@uwaterloo.ca}

\author{Alexander Botros and Stephen L.\ Smith
\thanks{This work is supported in part by the Natural Sciences and Engineering Research Council of Canada (NSERC)}
\thanks{The authors are with the Department of Electrical and Computer Engineering,
        University of Waterloo, 200 University Ave W, Waterloo ON, Canada, N2L 3G1
    (\botros, \smith)
        }
}

\begin{document}

\maketitle
\thispagestyle{empty}
\pagestyle{empty}

\begin{abstract}
In this paper we consider the problem of computing an optimal set of motion primitives for a lattice planner.  The objective we consider is to compute a minimal set of motion primitives that $t$-span a configuration space lattice.  A set of motion primitives $t$-span a lattice if, given a real number $t$ greater or equal to one, any configuration in the lattice can be reached via a sequence of motion primitives whose cost is no more than $t$ times the cost of the optimal path to that configuration.  Determining the smallest set of $t-$spanning motion primitives allows for quick traversal of a state lattice in the context of robotic motion planning, while maintaining a $t-$factor adherence to the theoretically optimal path.  While several heuristics exist to determine a $t$-spanning set of motion primitives, these are presented without guarantees on the size of the set relative to optimal. This paper provides a proof that the minimal $t-$spanning control set problem for a lattice defined over an arbitrary robot configuration space is NP-complete, and presents a compact mixed integer linear programming formulation to compute an optimal $t$-spanner. We show that solutions obtained by the mixed integer linear program have significantly fewer motion primitives than state of the art heuristic algorithms, and out perform a set of standard primitives used in robotic path planning.

\end{abstract}

\section{Introduction}
Sampling based motion planning can typically be split into two methodologies: probabilistic sampling, and deterministic sampling.  In probabilistic sampling based motion planning, a set of $n$ samples is randomly selected over a configuration space and connections are made between ``close" samples.  A shortest path algorithm can then be run on the resulting graph.  Examples of probabilistic sampling based algorithms include Probabilistic Road Map (PRM)~\cite{kavraki1996probabilistic}, Rapidly Exploring Random Trees (RRT)~\cite{lavalle2006planning}, or RRT*~\cite{karaman2011sampling}.  Probabilistic sampling is attractive as it avoids explicitly constructing a discretization of the configuration space, and can provide any-time sub-optimal paths.  However, theoretical guarantees on the error of generated paths relative to optimal are typically asymptotic in nature due to the randomness of the samples.  
\par
In deterministic sampling based algorithms, on the other hand, a uniform discretization of the configuration space, often called a \emph{state lattice}, is explicitly constructed to form the vertices of a graph.  In \cite{pivtoraiko2005generating}, \cite{pivtoraiko2009differentially}, the authors define a state lattice as a graph whose vertices are uniformly distributed over a robot workspace, and whose edges are those tuples $(i,j)$ such that both $i$ and $j$ belong to the vertex set of the lattice, and there exists an admissible controller that brings $i$ to $j$.  The cost of any edge in the lattice is dictated by dynamics of the system in question and the choice of control.  The authors of~\cite{pivtoraiko2005generating} note that any complexity of the system due to its dynamics may be taken into account during the process of lattice construction by pre-computing the cost of each motion in the lattice given the dynamics of the system.  This removes the burden of accounting for complex dynamics during the actual process of robotic path planning. 
\par
In addition to reducing the complexities involved in robotic path planning, state lattices have also proved versatile in the problems that they can address.  For example, adaptations to standard rigid state lattices are widely used in autonomous driving.   The authors of \cite{rufli2010design}, \cite{mcnaughton2011motion} demonstrate state lattice adaptations made to account for the structured environments of urban roads.

\begin{figure}[t]
\centering

    \includegraphics[width = 0.95\columnwidth]{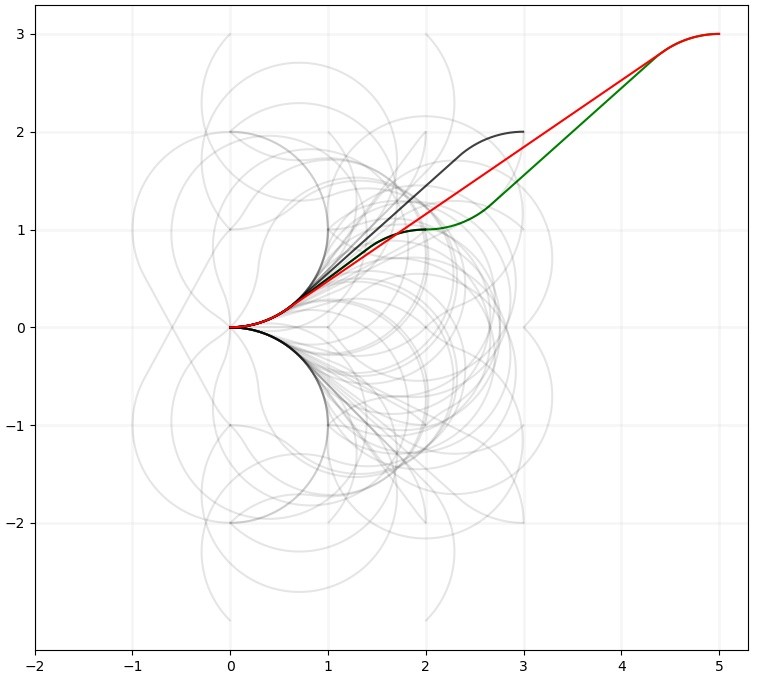}
    \caption{Example of motion planning using $t-$spanning motion primitives.  Vertex $(5,3,0)$ is reached optimally from $(0,0,0)$ via the red line, and is $1.015$-spanned by the light grey motion primitives.  The green line represents the path obtained using the motion primitives.  Black lines represent motion primitives used to reach $(5,3,0)$.}
    \label{TspanEx}
  
\end{figure}
State lattices can provide theoretical guarantees concerning the cost of proposed paths.  Using a connection radius growing logarithmically with the distance between between sampled lattice vertices, the authors of~\cite{janson2018deterministic} provide an upper bound on the error from optimal for a path produced by a deterministic sampling-based road map. The primary drawback with connecting lattice vertices in accordance with a connection radius  is the inclusion of redundant neighbours in the search space.  For example, for a square lattice in $\mathbb{R}^2$ with Euclidean distances, any-angle dynamics, and a connection radius $r\geq 2$, the vertex $(0,0)$ will have neighbours $(1,0)$ and $(2,0)$ while the vertex $(1,0)$ will also have $(2,0)$ as a neighbour.  When a shortest path algorithm is run, it will have to consider the path from $(1,0)$ to $(2,0)$ at least twice.  As a second example, consider the same lattice, and suppose that the connection radius $r$ is sufficiently large as to ensure that $(0,0)$ and $(4,1)$ are neighbors.  Then $(3,1)$ would also neighbour $(0,0)$, and the vertex $(4,1)$ could be reached by first passing through $(3,1)$.  The error on this decomposition of the direct path from $(0,0)$ to $(4,1)$ is given by
$$
\frac{\sqrt{3^2+1}+1}{\sqrt{4^2+1}} =1.0095,
$$
which may be sufficiently small ($0.95\%$) for the purposes of the user to remove $(4,1)$ as a neighbor of $(0,0)$.  Reducing the number of neighbours of a vertex results in a graph with fewer edges and faster shortest-path computations.  These redundancies motivate the following question: given a graph whose vertices are configurations of a mobile robot, and a real number $t\geq 1$, what is the smallest set of edges of the graph that guarantee that the cost-minimizing path from any vertex $i$ to any other vertex $j$ is no more than a factor of $t$ larger than cost of the optimal control between $i$ and $j$?  In other words, what is the smallest set of edges that $t-$span the vertices of the configuration space?   This set of edges, called $t$-spanning motion primitives, correspond to a set of admissible controllers that can be combined to navigate throughout the lattice to within a factor of $t$ of the optimal cost.  The notion of $t-$spanning motion primitives is similar to that of graph $t-$spanners first proposed in~\cite{peleg1989graph}.  An example of a $t-$spanning set of motion primitives is given in Figure~\ref{TspanEx}.

\par
 In~\cite{pivtoraiko2011kinodynamic}, the authors address the problem of computing a minimal set of $t$-spanning motion primitives for an arbitrary lattice.  We refer to this problem as the minimum $t$-spanning control set (MTSCS) problem.  For a Euclidean graph, attempting to determine such a minimum $t-$spanning set is known to be NP hard (see~\cite{carmi2013minimum}), and in~\cite{pivtoraiko2011kinodynamic}, the authors postulate that the same is true over an arbitrary lattice.  The authors of~\cite{pivtoraiko2011kinodynamic} provide a heuristic for the MTSCS problem, which while computationally efficient, does not have any known guarantees on the size of the set relative to optimal.
\par
The main contributions of this paper are twofold.  First, we provide a proof that the MTSCS problem is NP-complete for a lattice defined on an arbitrary robot workspace.  Second, a mixed integer linear programming (MILP) formulation of the MTSCS problem.  This MILP formulation contains only one integer variable for each edge in the lattice.  The compact formulation is achieved by establishing that the edges in an minimal lattice spanner must form a directed tree.  This MILP formulation constitutes the only non-brute force approach to optimally solving the MTSCS problem.  Though this MILP formulation does not scale to very large lattices, it can be solved offline and the solution will hold for any start-goal path planning instance in the same lattice.  Several numerical examples are provided in Section~\ref{SIMRES}.  These examples illustrate that minimal spanners can be computed for complex problems, and the resulting solutions are often significantly smaller than those found by the heuristic in~\cite{pivtoraiko2011kinodynamic}.  They also show the quality of paths constructed using the $t-$spanning control set in an A* search of paths in a lattice in the presence of obstacles.

\section{Background and Problem Statement}
In this section we formalize the notion of state lattices in terms of group theory and then define the minimum $t$-spanning control set problem.

\subsection{Preliminaries in Group Theory}
We begin by summarizing some of the concepts in geometric group theory outlined in~\cite{bullo2004geometric}, modifying some of the notation for our uses.  
\par
Consider any subgroup $G$ of the $d$-dimensional Special Euclidean group SE$(d)$.  By definition, the elements of $G$ are orientation-preserving Euclidean isometries of a rigid body in $d$-dimensional space. Let $s$ denote the identity element of $G$.  Any isometry $i$ in $G$ represents a possible configuration of a mobile robot.  However, $i$ can also be interpreted as a motion taking a mobile robot starting at $s$ to $i$.  As such, motions can be \textit{concatenated} to produce other motions.  The group operation of $G$, denoted $\cdot$ is defined as the left multiplication of elements of $G$. Thus, for $i,j\in G$, we may define another element $i\cdot j$ that is also in $G$ and represents the concatenation of the isometries $i$ and $j$. The isometry $i\cdot j$ is the motion that takes $s$ to $i$ by motion $i$, and then takes $i$ to $i\cdot j$ by motion $j$.

\subsection{State Lattices}

For a mobile robot, let $B_d(x)$ be defined as the set of all points in $\mathbb{R}^d$ occupied by the robot while its center of motion is at $x\in \mathbb{R}^d$.  
\begin{definition}[Robot Swath]
 Consider a mobile robot whose possible configurations are isometries in $G$.  Given $i,j\in G$ and an optimal steering function $u:[0,1]\rightarrow \mathbb{R}^d$ such that $u(0)=i, u(1)=j$, we define the \textit{swath} associated with the steering function $u$ between isometries $i$ and $j$ as
$$
\text{Swath}_u(i,j)=\{x\in \mathbb{R}^d : \exists \tau\in [0,1], x\in B_d(u(\tau)) \}.
$$
The set $\text{Swath}_u(i,j)$ represents the set of all vertices in $\mathbb{R}^d$ that the mobile robot will occupy as it traverses a path defined by the steering function from $i$ to $j$. 
\end{definition}
\begin{definition}[Valid Concatenation]
Consider isometries $i,j, p\in G$ of a mobile robot with optimal steering function $u$ that takes the robot starting at $i$ to $j$.  Suppose that $i\cdot p = j$, and let $W_d^k\subset \mathbb{R}^d$ denote a $d-$dimensional workspace bounded by a ball of radius $k$.  We say that the concatenation $i\cdot p = j$ is \textit{valid} if 
$$
\text{Swath}_u(i,j)\subseteq W_d^k.
$$
If $i\cdot p$ is a valid concatenation, we write $i\oplus p = i\cdot p = j$.  If $i\cdot j$ is not valid, we say that $i\oplus j$ is not defined.
\end{definition}
Given a workspace, we may determine the set of all valid concatenations of isometries in accordance with the above definition. However, given a set of concatenations $V$ that we wish to declare as valid, we may also define a workspace $W_d^k$.  This is done by assuming that the mobile robot in question has an admissible controller that takes $i$ to $j$ if and only if $i\cdot p=j$ is a valid concatenation (i.e., $i\oplus j\in V$).
\begin{definition}[Lattice]
\label{latticedef}
Let $(W_d^k, G, u)$ denote a workspace, group of isometries, and steering function respectively of a mobile robot.  Given a subset $B\subseteq G$, the \textit{lattice} generated by $B$ is defined as
\begin{equation}
\begin{split}
L^k(B) = \{j\in G\, : \, j = i_1\oplus i_2\oplus\dots \oplus i_m \text{ for some } \\ m\in \mathbb{N}, i_{1},...,i_{m}\in B\}.
\end{split}
\end{equation}
\end{definition}
  The lattice $L^k(B)$ is the set of elements, called \textit{vertices} of $G$ that can be obtained via valid concatenations of elements of $B$. 
  \par Definition \ref{latticedef} implies that there are two modes of construction of a lattice: lattice vertices may be fixed in $\mathbb{R}^d$ followed by the computation of steering functions from vertex to vertex, or a basic set of control primitives $B$ may be fixed and used to generate a lattice.

\par
Let $c:L^k(B)\rightarrow \mathbb{R}_{\geq 0}$ denote a cost function on $G$.  This cost function represents the cost of an optimal control taking $s$ to $p\in L^k(B)$.  Assuming that control costs are time invariant, if $i,j\in L^k(B)$ such that $i\cdot p = j$, (i.e., $p$ is the isometry taking vertex $i$ to vertex $j$) then the cost of the controller taking $i$ to $j$ is given by $c(p)$.  In this paper, we assume that  
\begin{enumerate}
    \item $c(p)\geq 0$, for all $p\in L^k(B)$,
    \item $c(p)=0$ implies that  $p=s,$ 
    \item if $i,j,k,p,q,r\in L^k(B)$ with $i\oplus p =j$, $j\oplus q=k$, and $i\oplus r=k$, then $ c(r)\leq c(p)+c(p).$
\end{enumerate}
If 1), 2), and 3) hold for a cost function $c$, we say that $c$ is an \textit{almost-metric}.  Note that these assumptions hold for control costs of mobile robots.  Observe that $c$ cannot properly be called a metric, as it lacks the symmetry requirement of a metric.  Figure~\ref{fig:Lattice} illustrates two examples of lattices.

\begin{figure}
    \centering
    \begin{subfigure}[b]{0.245\textwidth}
        \includegraphics[width=\textwidth]{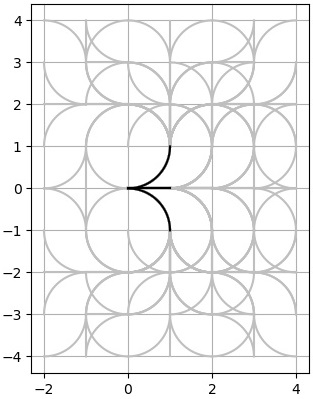}
        \label{fig:gull}
    \end{subfigure}
    ~ 
    \begin{subfigure}[b]{0.22\textwidth}
        \includegraphics[width=\textwidth]{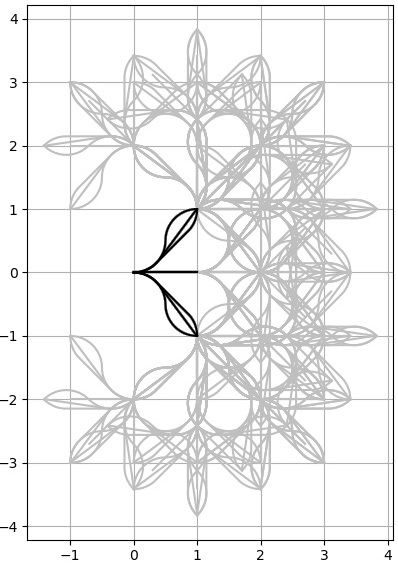}
        \label{fig:Lattddice}
    \end{subfigure}
    \caption{(Left) Partial lattice with three generators (black) in the group $G=\mathbb{R}^2\times SO(2)$, cost function given by Dubins' distance for turning radius $1$.  (Right) Partial lattice generated by six generators (black) in the group $G=\mathbb{R}^2\times SO(2)$.  Cost function $c$ is given by Dubins' distance for a minimum turning radius of $0.5$.}\label{fig:animals}
    \label{fig:Lattice}
\end{figure}

The tuple $(L^k(B),c)$ induces a directed, weighted graph whose \textit{edges} are those tuples $(i,j)\in L^k(B) \times L^k(B)$ such that there exists $p\in L^k(B)$ with $i\oplus p = j$.  The \textit{cost} associated $(i,j)$ is $c_{ij}=c(p)$.  
\par
Let $E\subseteq L^k(B)$, and suppose that $j\in L^k(E)$.  We define a \textit{path} in $E$ to $j$, denoted $p^E(j)$, as a sequence of $m$ edges $(i_r, i_{r+1})$ such that $i_{r+1}=i_r\oplus p_r$ for $p_r\in E$ for all $r=1,\dots, m-1$, and that takes $s$ to $j$. The \textit{length} of the path $p^E(j)$ is 
\[
l=\sum_{r=1}^mc_{i_{r}i_{r+1}}.
\]
The \textit{distance} in $E$ to $j$, denoted $d^E(j)$, is the length of the length-minimizing path in $E$ to $j$.  
\par
For the remainder of this paper, the term path and the notation $p^E(j)$ will refer to a path in $E$ to $j$ of minimal length $d^E(j)$.  

\subsection{The Minimum $t$-Spanning Control Set Problem}

We are now ready to introduce the notion of $t$-reachability of a set of motion primitives in a lattice.

\begin{definition}[$t$-reachability]
Given $(L^k(B),c)$, a subset $E\subseteq L^k(B)$, and a real number $t\geq 1$, a vertex $j\in L^k(B)$ is \textit{$t$-reachable} from $E$ if 
\[
d^E(j)\leq t c(j).
\]
That is, $j$ is $t-$reachable from $E$ if the length of a shortest path $p^E(j)$ in $E$ to $j$ is no more than $t$ times the length of the shortest path from $s$ to $j$ in $L^k(B)$.  If every $y\in L^k(B)$ is $t-$reachable from $E$, we say that $E$ \textit{$t$-spans} the lattice $L^k(B)$.
\end{definition}
The Minimum $t-$Spanning Control Set problem is formulated as follows.
\begin{problem}[Minimum $t-$spanning Control Set Problem] 
\label{MCSP}
\textbf{Input:} A tuple $(L^k(B),c)$, and a real number $t\geq 1$.
\\
\textbf{Output:} A set $E\subseteq L^k(B)$ of minimal size that $t-$spans $L^k(B)$.
\end{problem}

Observe that $E$ is a set of $t$-spanning motion primitives for a mobile robot whose configuration space is given by $L^k(B)$.  That is, any configuration in the configurations space $L^k(B)$ may be decomposed into a sequence (path) of motions in $E$ such that the cost of any decomposition is no more than a factor of $t$ larger than the cost of the configuration.
\par
In the remainder of the paper we establish the hardness of the problem and then present a mixed integer linear Programming (MILP) formulation of Problem~\ref{MCSP}.

\section{Hardness of Computing MTSCS}
\label{sec:hardness}

In this section, we begin by characterizing the hardness of Problem~\ref{MCSP}, which serves to motivate the MILP formulation presented in the next section.   

\begin{theorem}
\label{NPHARDPROB}
Problem \ref{MCSP} is NP-hard.
\end{theorem}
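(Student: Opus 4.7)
The plan is to establish NP-hardness by polynomial-time reduction from \textsc{Set Cover}, whose covering structure is naturally mirrored by the fact that including a single motion primitive $p \in E$ simultaneously enables every edge of the form $(i, i\oplus p)$ in the lattice, and therefore can bring many vertices within the $t$-reachability threshold at once. Specifically, given a \textsc{Set Cover} instance $(U, \mathcal{S} = \{S_1,\dots,S_m\}, K)$, I will construct a tuple $(L^k(B), c)$ and a real number $t \geq 1$, polynomial in the size of the instance, such that the constructed lattice admits a $t$-spanner of size $K + r$---where $r$ is a polynomially bounded number of primitives that are forced into every feasible $E$---if and only if $(U, \mathcal{S})$ admits a cover of size $K$. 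Membership in NP is immediate because a candidate $E$ can be verified by running a single-source shortest-path computation from $s$ on the subgraph induced by $E$ and checking $d^E(j) \leq t\,c(j)$ at each vertex, which in fact upgrades the result to NP-complete.

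For the underlying lattice I would take $G = \mathbb{Z}^n$ with $n$ polynomial in $|U| + |\mathcal{S}|$, and let $c$ be a weighted $\ell_1$-style cost, which automatically satisfies the almost-metric axioms of Section~2. For each element $u \in U$ I would place a target vertex $v_u$ along its own distinguished coordinate axis and introduce a long direct primitive $\mu_u$ with $c(\mu_u) = c(v_u)$. For each set $S_j \in \mathcal{S}$ I would introduce a set primitive $\pi_j$ whose displacement reaches a junction vertex, together with short finishing primitives from that junction to each $v_u$ with $u \in S_j$. The cost scalings are tuned so that every $v_u$ is $t$-reachable from $E$ precisely when either $\mu_u \in E$, or some $\pi_j$ with $u \in S_j$ belongs to $E$. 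The finishing primitives together with a constant-size spine that $t$-covers the remaining auxiliary lattice vertices are unavoidably forced into every feasible $E$, accounting for the constant $r$, so that minimizing $|E|$ reduces exactly to selecting a minimum collection of set primitives $\pi_j$ covering $U$.

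The main obstacle will be ruling out unintended cheap concatenations of set primitives---for instance, showing that a target $v_u$ cannot be reached within factor $t$ by chaining several $\pi_{j_1} \oplus \cdots \oplus \pi_{j_s}$ for sets that do not contain $u$. By placing each gadget along a pairwise-orthogonal coordinate subspace and using an $\ell_1$ cost, distances decompose additively, so any such unintended concatenation strictly exceeds the $t$-threshold. An analogous argument confines the auxiliary vertices to a disjoint coordinate axis, where they can be $t$-covered independently by a fixed set of primitives. Combining these pieces gives a polynomial-time reduction in which a \textsc{Set Cover} solution of size $K$ corresponds exactly to an MTSCS solution of size $K + r$, so Problem~\ref{MCSP} is at least as hard as \textsc{Set Cover} and hence NP-hard.
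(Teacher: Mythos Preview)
Your reduction is from \textsc{Set Cover}, whereas the paper reduces from the metric graph $t$-spanner problem. The latter is a more natural source because its spanning structure maps almost verbatim onto the lattice $t$-spanning condition: the paper encodes each directed edge $(a,b)$ of the input graph as a lattice generator in $\mathbb{R}^2$, declares the workspace so that $ab \oplus cd$ is valid iff $b=c$, and sets $c(ab)$ to the shortest-path distance from $a$ to $b$ in the graph. Paths of generators in the lattice then correspond bijectively to edge walks in the graph, and the minimum lattice control set has exactly the size of a minimum graph $t$-spanner. The construction stays in fixed dimension and needs no auxiliary gadgetry.

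Your sketch, by contrast, has a real gap at the junction vertices. Each set primitive $\pi_j$ terminates at a junction $w_j\in L^k(B)$, and $w_j$ must itself be $t$-spanned by $E$. In the construction you describe, the only primitive landing at $w_j$ is $\pi_j$ (finishing primitives \emph{leave} $w_j$; they do not arrive there), so every feasible $E$ is forced to contain $\pi_j$ for \emph{every} $j$. Once all $\pi_j$ are present, every target $v_u$ is already $t$-reachable via some $\pi_j\oplus f_{j,u}$, the direct primitives $\mu_u$ become superfluous, and the optimal $|E|$ equals $m+r$ regardless of the optimal cover size $K$---the reduction collapses. A repair would require, for each $j$, an alternative route to $w_j$ built only from primitives already forced into $E$ (for instance reverse finishing moves $-f_{j,u}$ composed with some $\mu_u$ or some $\pi_{j'}$ in the chosen cover), together with a cost tuning that simultaneously (i) meets the $t$-bound at every $w_j$, (ii) does not create unintended sub-$t$ routes to any $v_u$, and (iii) $t$-spans every additional concatenation vertex those reverse moves drag into $L^k(B)$. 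Your orthogonality/$\ell_1$ argument addresses only (ii); obligations (i) and (iii) are untouched, and they are where the real difficulty of a \textsc{Set Cover} reduction lies.
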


\begin{figure}
    \centering
    \begin{subfigure}[b]{0.45\linewidth}
        \includegraphics[width = \linewidth]{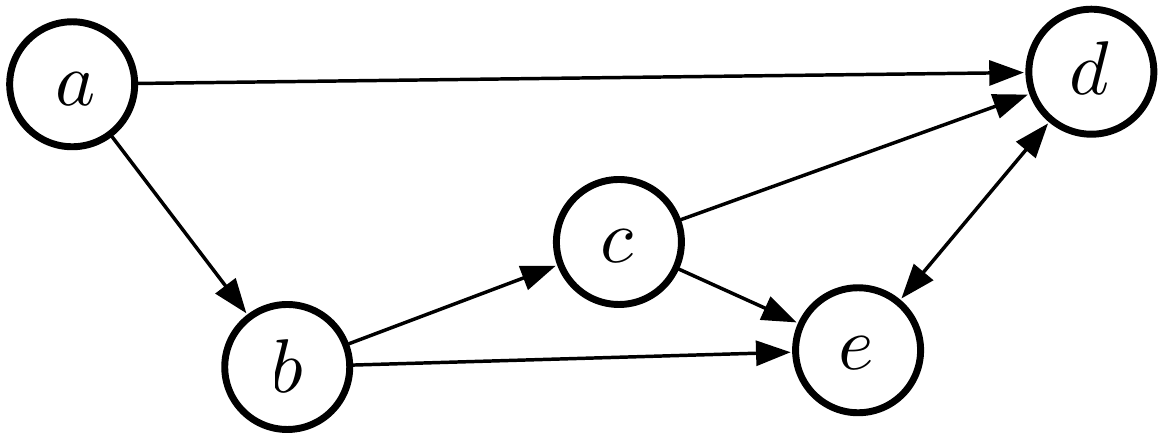}
        \label{fig:01graph_input}
        \caption{Graph $t$-spanner input.}
    \end{subfigure}
    \hfill
    \begin{subfigure}[b]{0.45\linewidth}
        \includegraphics[width = \linewidth]{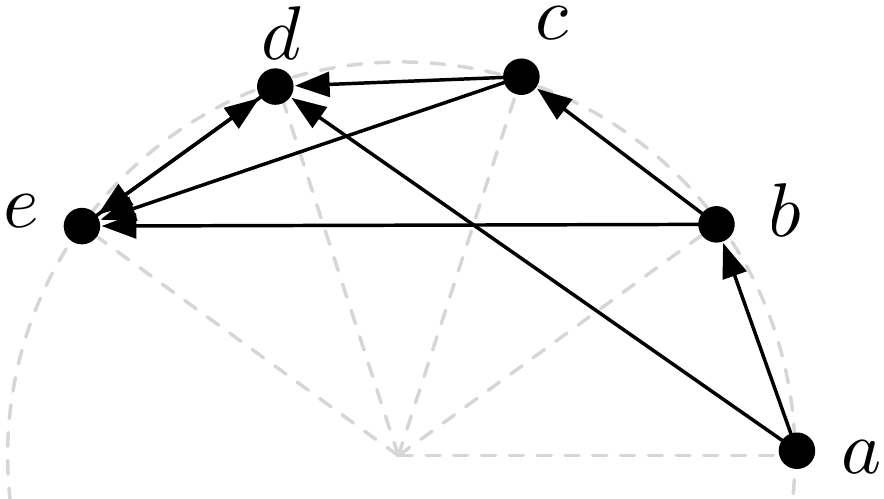}
        \caption{Arrangement with non-equal edge vectors.}
        \label{fig:02graph_embedding}
    \end{subfigure} \\[1em]
    \begin{subfigure}[b]{0.45\linewidth}
        \includegraphics[width = \linewidth]{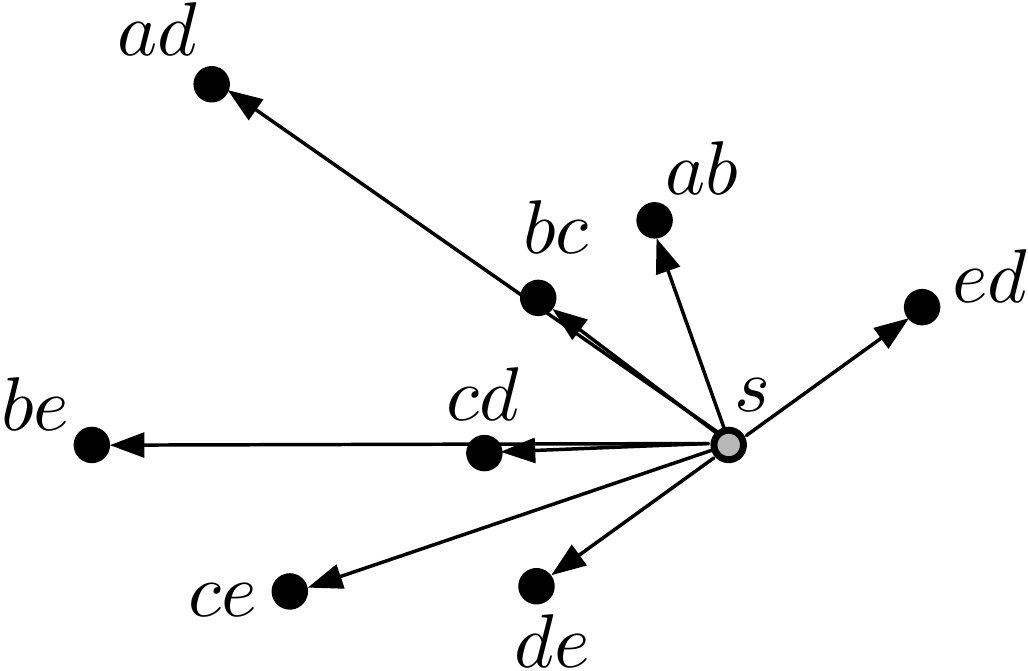}
        \caption{Lattice generating set $B$.}
        \label{fig:03lattice_generator}
    \end{subfigure}
    \hfill
    \begin{subfigure}[b]{0.45\linewidth}
        \includegraphics[width = \linewidth]{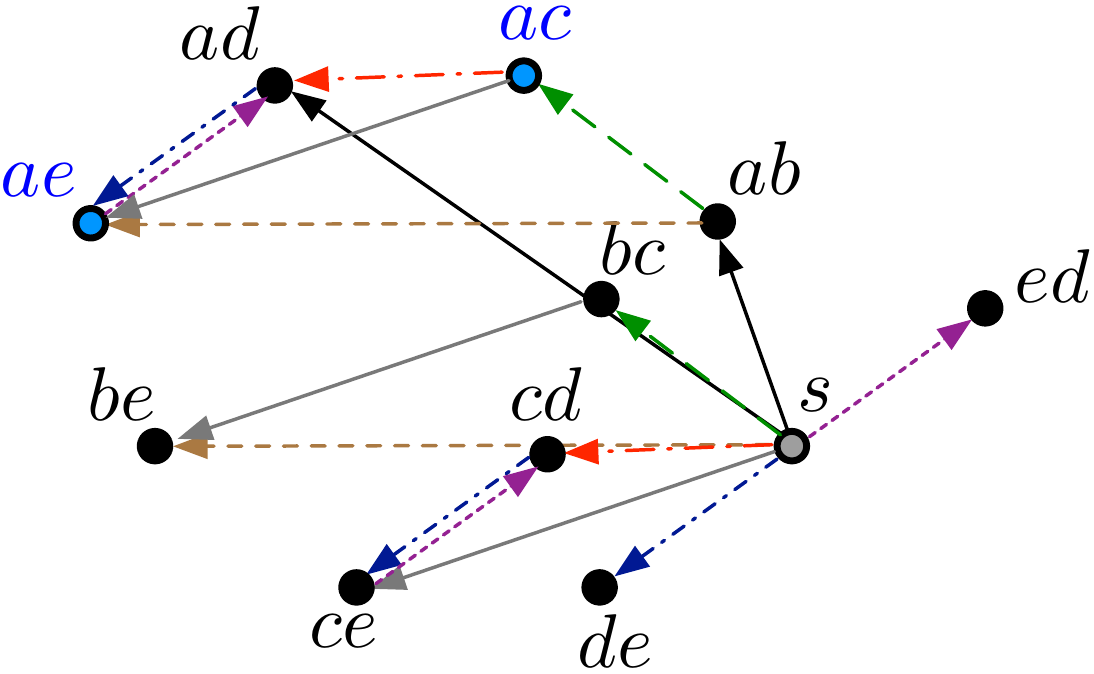}
        \caption{Complete lattice $L^k(B)$.}
        \label{fig:04lattice_full}
    \end{subfigure} \\[1em]
    \begin{subfigure}[b]{0.48\linewidth}
        \includegraphics[width = \linewidth]{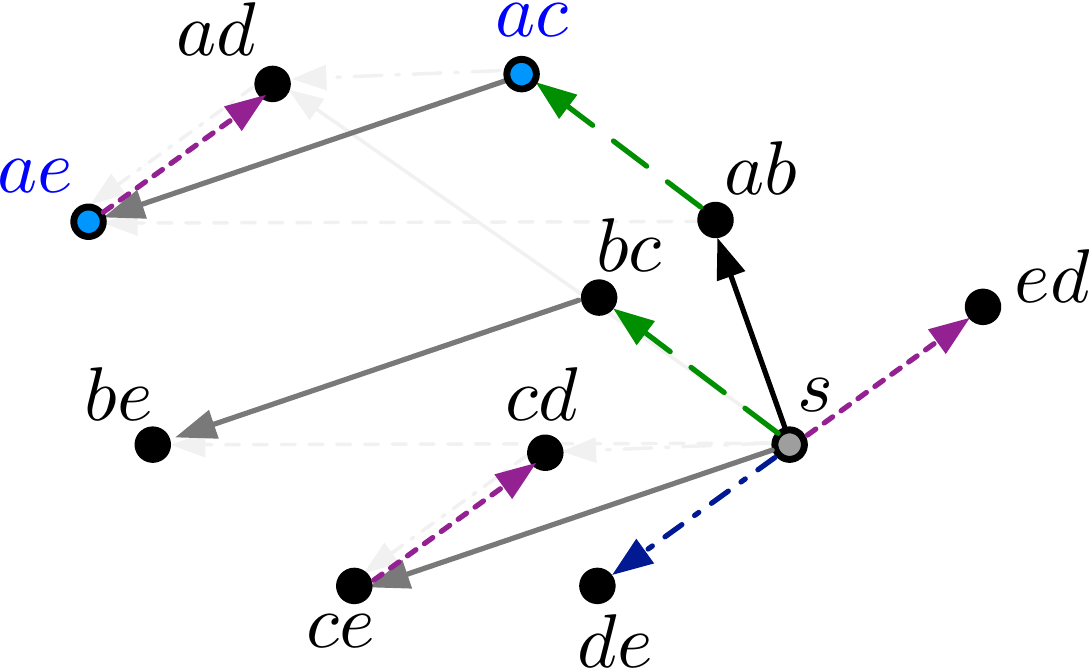} 
         \caption{Lattice $t-$spanner $Q'$.}
        \label{fig:05arborescence}
    \end{subfigure}
    \hfill
    \begin{subfigure}[b]{0.45\linewidth}
        \includegraphics[width = \linewidth]{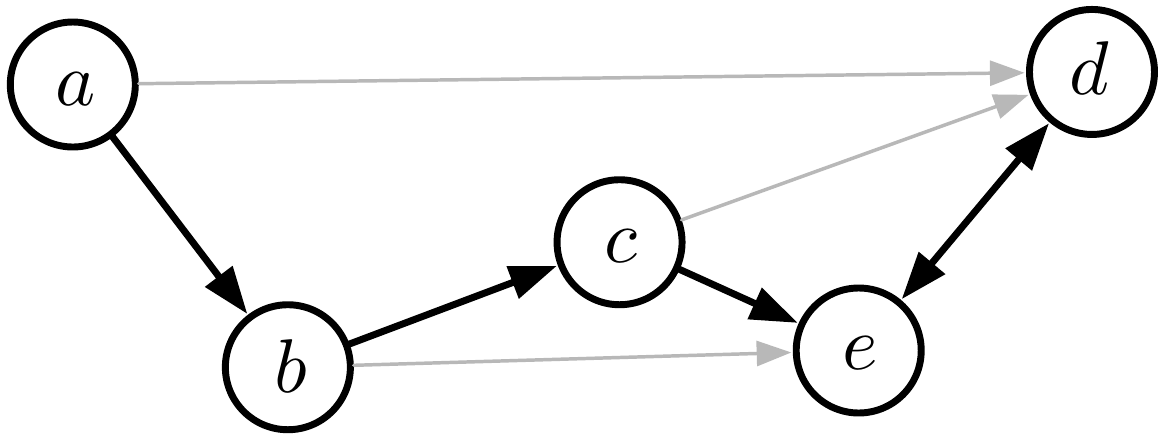}
            \caption{Graph $t$-spanner $Q$.}    
        \label{fig:06graph_spanner}
    \end{subfigure}
    \caption{Illustration of the steps in the reduction from metric graph $t$-spanner to Problem~\ref{MCSP}.}
    \label{fig:proof_illustration}
\end{figure}

\begin{proof}
     To show that Problem~\ref{MCSP} is NP-hard, we will construct a reduction from a metric graph $t-$spanner problem.  This problem is known to be NP-hard (see \cite{cai1994np}, \cite{carmi2013minimum}).   Given a directed graph $G=(V_G,E_G,w)$ with vertex set $V_G$, edge set $E_G$, edge weights $w$, and a value $t\geq 1$, we construct a lattice $L^k(B)$ in the underlying group $\mathbb{R}^{2}$ with almost-metric cost function $c$ and a real number $t'\geq 1$ that constitutes an instance of Problem~\ref{MCSP}.
     \par
     We begin by arranging the graph $G$ in the plane.  Vertices $a\in V_G$ will be represented as points in $a\in\mathbb{R}^2$, and edges $(a,b)\in E_G$ will correspond to vectors connecting $a$ and $b$.  The arrangement is done in such a way as to ensure that no two edge vectors are equal and can be performed in time polynomial in the size of the graph.
     \par
     To construct the arrangement, partition the upper half of the unit circle centered at the origin into $|V_G|$ equally spaced wedges, and place one vertex at the corner of each wedge starting at $(1,0)$.  For each edge $(a,b)\in E_G$, add a vector from $a$ to $b$ in the plane (see Figure~\ref{fig:02graph_embedding}).  Observe that no two edge vectors are equal.  Indeed, each edge vector corresponds to a non-diametral secant line of the unit circle. A circle in $\mathbb{R}^2$ possesses exactly two equal non-diametral secant lines appearing as mirror images about a diameter.  Therefore, two equal non-diametral secant lines cannot both occupy the upper half of the unit circle.
     \par
     We next construct the lattice in the underlying group $\mathbb{R}^2$ from the arrangement.  This is accomplished by first attaching each of the edge vectors to the origin.  For each edge vector $(a,b)$, declare a vertex $ab$ in the lattice generating set $B$ located at the tip of the edge vector (see Figure~\ref{fig:03lattice_generator}).  Observe that no two vertices in $B$ are co-located because no two edge vectors are equal.  Next, we construct the remaining lattice vertices and edges.  For each vertex $ab\in B$, determine the out edges in $G$ of the vertex $b$, say $(b,c), (b,e)$.  Concatenate (by vector addition) the vertices in $B$ corresponding with each out edge (say, $bc, be$) to the vertex $ab$.  The result of the concatenation is $ab\oplus bc = ac$.  If the vertex $ac$ already exists in the lattice,  proceed to the next out edge of $b$.  Otherwise, declare vertex $ac\in L^k(B)$.  The construction of the lattice is illustrated in Figure~\ref{fig:04lattice_full}.
     \par
     Finally, to each $ab\in L^k(B)$, define the cost $c(ab)$ as the cost of the minimal path in $G$ from $a$ to $b$.  Observe that this cost is not associated with the Euclidean norm of any vector used in the construction of $L^k(B)$.  Observe further, that in constructing the lattice $L^k(B)$, we have implicitly defined a workspace $W_2^k$ that admits concatenations $ab\cdot bc$ if and only if $(a,b), (b,c)\in E_G$, and admits $s\cdot ab$ if and only if $(a,b)\in E_G$.
     
     \par To prove the correctness of the reduction we show that if $Q'$ is a MTSCS of $L^k(B)$ (see Figure~\ref{fig:05arborescence}), then there exists a minimal $t-$spanner of $G$ of equal size. Observe that if $Q'$ is a minimal set of $t-$spanning vertices of $L^k(B)$, and $ab\in Q'$, then $(a,b)\in E_G$. Indeed, if $(a,b)\notin E_G$, then no path in $L^k(B)$ can involve a concatenation of any vertex in $L^k(B)$ with $ab$ by the definition of the workspace.  As such, $ab$ will not appear in any minimal set of $t-$spanning vertices.  
     \par
     Observe further, that any path $\{u_j, u_{j+1}\}_{j=1}^m$ of $m$ edges in $E_G$ corresponds uniquely to a path of vertices $\{u_ju_{j+1}\}_{j=1}^m$ in $L^k(B)$ and vice verse.  Moreover, the total cost of the path in $G$ is equal to the total cost of the path in $L^k(B)$.    
     \par
     Thus, the vertices in $Q'$ correspond to edges in $G$, and any path of vertices in $Q'$ corresponds to a path of edges in $G$ of equal cost.  Therefore, $Q'$ induces a set of edges $Q\subseteq E_G$ such that $(a,b)\in Q \iff ab\in Q'$, and where $Q$ is a $t-$spanner of $G$.  Therefore, the minimal $t-$spanner $G$ cannot have size greater than $|Q'|$.  Moreover, it follows that any set $Q$ that is a minimal $t-$spanner of $G$ induces a set $Q'$ of equal size comprised of vertices that $t-$span $L^k(B)$. Therefore, the minimal set of $t-$spanning motion primitives cannot have size larger than $Q$.  It follows then that $|Q'|=|Q|$ and a solution to our constructed instance of Problem \ref{MCSP} provides a minimal $t$-spanning set of edges $Q$ for the metric graph t-spanner problem.  This is summarized in Figures~\ref{fig:05arborescence}, and~\ref{fig:06graph_spanner}.

\end{proof}
\section{Computing a MTSCS}
In light of Theorem \ref{NPHARDPROB}, and assuming that $P\neq NP$, an efficient algorithm to exactly solve Problem~\ref{MCSP} does not exist. However, in what follows we formulate a MILP Problem~\ref{MCSP} that uses one just integer variable for each edge in $L^k(B)$.

\subsection{Properties of Minimal Spanning Control Sets}

For any vertex $p\in L^k(B)$, let
$$
S_p=\{(i,j) \ : \ i,j\in L^k(B), \  i\oplus p =j\}.
$$  
That is, $S_p$ is the set of all pairs $(i,j)$ such that $p$ takes $i$ to $j$ and $i\cdot p$ is a valid concatenation. The set of all edges in the graph $L^k(B)$ is given by
$$
\mathcal{E}=\bigcup_{p\in L^k(B)-\{s\}}S_p.
$$
\par
A naive approach to developing a MILP would be to define $|L^k(B)|\cdot|\mathcal{E}|$ integer variables $x_{pij}$, each which takes the value 0 if $(i,j)\in S_p$ and $p\not\in E$, and value 1 if $p\in E$.  However, the number of required integer variables in the MILP can be reduced to $|\mathcal{E}|$ via a graph theoretic approach to Problem~\ref{MCSP}.
This approach is motivated by the following definition and lemma:
\begin{definition}[Arborescence]
In accordance with Theorem 2.5 of~\cite{korte2012combinatorial}, a graph $T$ with a vertex $s$ is an arborescence rooted at $s$ if every vertex in $T$ is reachable from $s$, but deleting any edge in $T$ destroys this property.
\end{definition}
\begin{lemma}
\label{TREELEM}
Let $E\subseteq L^k(B)$ be a solution to Problem 1. We construct a graph $T=(V_T, E_T)$ whose vertices $V_T$ are those vertices in $L^k(B)$, and whose edges $E_T$ are defined as follows: let
$$
T' = \bigcup_{i\in L^k(B)-\{s\}}p^E(i).
$$
For each $i\in L^k(B)-\{s\}$, if $T'$ contains two paths $p_1, p_2$ to $i$, remove the last edge in $p_2$ from $T'$.  Let the remaining edges be the set $E_T$.  Then $T$ is an arborescence rooted at $s$, and for each $i\in L^k(B)$, the value $d^E(i)$ is the length of the path in $T$ to $i$.
\end{lemma}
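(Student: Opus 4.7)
The plan is to establish two properties of the constructed graph $T$: (i) every vertex of $L^k(B)$ is reachable from $s$ in $T$, and (ii) every non-root vertex has in-degree exactly one in $T$. Together with the observation that deleting any edge from such a graph must destroy reachability to its head, these give the arborescence property. The distance claim $d^T(i)=d^E(i)$ will then fall out of the same analysis, because the unique path retained in $T$ to each $i$ will itself be a shortest path in $E$ to $i$.

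Since $E$ is a $t$-spanning control set, every $i \in L^k(B)\setminus\{s\}$ admits a shortest path $p^E(i)$ from $s$ in $E$ of finite length $d^E(i)$, so $T' = \bigcup_i p^E(i)$ already reaches every vertex. Because the cost function satisfies $c(p)>0$ for $p\neq s$, all edge weights are strictly positive, so $T'$ is a DAG. The key structural observation is that \emph{every} edge of $T'$ lies on some shortest path: if $(j,i)\in T'$ then $(j,i)$ is an edge of $p^E(k)$ for some $k$, and the optimal substructure of shortest paths forces $d^E(j)+c_{ji}=d^E(i)$.

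I would now analyze the edge-removal process by induction, processing vertices in order of nondecreasing $d^E(\cdot)$. Suppose that when we begin processing $i$, every $j$ with $d^E(j)<d^E(i)$ already has in-degree one and is reached from $s$ in the current graph by a unique path of length $d^E(j)$. Every in-edge $(j,i)$ of $i$ in the current graph comes from such a $j$ (since $c_{ji}>0$ forces $d^E(j)<d^E(i)$) and, by the structural observation above, extends the unique path to $j$ into a path to $i$ of total length $d^E(j)+c_{ji}=d^E(i)$. Consequently, the $s$-to-$i$ paths currently in the graph are in bijection with the in-edges of $i$, and all have length exactly $d^E(i)$. Deleting the last edge of any redundant path therefore lowers the in-degree of $i$ by one without disturbing any remaining shortest path, and the process halts with $i$ having in-degree one and a unique $s$-to-$i$ path of length $d^E(i)$.

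The main obstacle is verifying that processing $i$ preserves the inductive invariant for vertices already processed. This reduces to the remark that every removed edge terminates at $i$, so no edge leaving a previously processed vertex is ever deleted, and the unique shortest paths established earlier remain intact. Once every vertex has been processed, every non-root vertex has in-degree one, $s$ has in-degree zero, and every vertex is still reached from $s$ by a path of length $d^E(i)$; removing any edge then cuts off the unique path to its head, which is exactly the arborescence property in the paper's definition. This simultaneously yields (i), (ii), and the length claim, completing the lemma.
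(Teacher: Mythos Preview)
Your proof is correct and takes a somewhat different, more careful route than the paper's. The paper argues directly: path uniqueness in $E_T$ follows because the procedure explicitly removes redundant last edges; edge deletion then destroys reachability to the head (since the remaining path is unique); and the distance claim is asserted by saying that the chosen path $p^E(i)\subset E_T$ survives. That last step is actually under-justified in the paper---the \emph{specific} path $p^E(i)$ need not survive the pruning (its final edge might be exactly the one removed when processing $i$), although whatever path does survive must still have length $d^E(i)$.

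Your approach sidesteps this by isolating the key structural fact that every edge $(j,i)$ of $T'$ is \emph{tight}, i.e., $d^E(j)+c_{ji}=d^E(i)$, and then running an induction over vertices in nondecreasing $d^E$. This makes the distance claim immediate (any retained in-edge extends the unique shortest path to $j$ into one of length exactly $d^E(i)$) and cleanly verifies that removals at $i$ cannot disturb paths to previously processed vertices, since every deleted edge has head $i$. The price is that you implicitly fix a processing order, whereas the lemma's procedure leaves the order unspecified; however, your tight-edge observation is precisely what shows the outcome is order-independent, so this is not a real restriction. In short, both arguments reach the same conclusion, but yours makes explicit the optimal-substructure property that the paper relies on implicitly, and it closes the small gap in the paper's justification of the distance claim.
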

\begin{proof}
Observe that $L^k(E) = L^k(B)$ since $E$ solves Problem~\ref{MCSP}.  Therefore, if $j$ is a vertex in $T$, then it must be reachable from $s$.  Observe further, that for each $j\in V_T$, there exists a unique path in $T$ to $j$.  Indeed, if there were two paths $p_1, p_2$ of edges in $E_T$ to $j$, then $p_1, p_2$ would be paths in the edge set $T'$ because $E_T\subseteq T'$.  However, upon construction of $E_T$ from $T'$, the last edge of $p_2$ would be removed, implying that $p_2$ could not be a path in $E_T$ to $j$. 
\par
Suppose that an edge $(i,j)$ is removed from $E_T$.  Then by the definition of $T$, there must exist some $r\in V_T$ with $(i,j)$ on the unique path in $T$ to $r$.  Therefore, if $(i,j)$ is removed from $E_T$, then there is no path of edges in $E_T-\{(i,j)\}$ from $s$ to $r$, which implies that $r$ is not reachable from $s$.  Therefore, $T$ is an arborescence.
\par
It will now be shown that the length of the path in $T$ to any vertex $i\in V_T$ is $d^E(B)$.  Recall that $p^E(i)$ is defined as a path of minimal length in $E$ to $i$, and the length of this path is $d^E(i)$.  Therefore, for any $i\in V_T$, there exists one path $p^E(i)\subset E_T$ to $i$, and the length of this path is $d^E(i)$. This implies that the distance in $T$ to $i$ is $d^E(i).$
\end{proof}
Lemma \ref{TREELEM} implies that if $E$ is a minimum $t-$spanning control set of $L^k(B)$, then there is a corresponding arborescence $T$ whose vertices are those vertices of $L^k(B)$, whose edges $(i,j)$ are members of $S_p$ for some $p\in E$, and in which the cost of the path from $s$ to any vertex $i\in T$ is no more than $tc_i$. 
\par
Suppose now that $|L^k(B)|=n$ and that all the vertices are enumerated as $1,2,...,n$ with $s=1$.  For any set $E\subseteq L^k(B)$ such that $L^k(B)\subseteq L^k(E)$, define $n-1$ decision variables $y_p, p=2,...,n $ as
$$
y_p = \begin{cases}
1, \ \text{if } p\in E\\
0, \ \text{otherwise}.
\end{cases}
$$
For each $(i,j)\in \mathcal{E}$, let  
$$
x_{ij}=\begin{cases}
1 \ &\text{if } (i,j)\in T \\
0 \ &\text{otherwise}. 
\end{cases}
$$
Let $z_i$ denote the length of the path in the tree $T$ to vertex $i$ for any $i\in L^k(B)$, and let $L'=L^k(B)-\{s\}$.  

\subsection{High-Level Description of Optimization}

We begin by providing an high-level description of the objective and problem constraints, followed by a precise MILP formulation.  To solve Problem~\ref{MCSP} our objective is to minimize $|E|$ subject to the following constraints:
\begin{description}
\item[Usable Edge Criteria:] For any $p\in L'$, if $y_p=1$, then $p\in E$.  Therefore, for any $(i,j)\in S_p$, the variable $x_{ij}$ can take either the value 0 or 1.  On the other hand, if $y_p=0$, then $x_{ij}=0$ for all $(i,j)\in S_p$, implying that $(i,j)$ may not appear in $T$.

\item[Cost Continuity Criteria:]
If $x_{ij}=1$, for any $(i,j)\in \mathcal{E}$, then the path in the arborescence $T$ from $s$ to $j$ contains the vertex $i$.  Therefore, it must hold that
\[
z_j = z_i + c_{ij}.
\]
\item[$t-$Spanning Criteria:]
The length of the path in $T$ to any vertex $j\in L'$ can be no more than $t$ times the length of the direct edge from $s$ to $j$.  That is,
\[
z_j \leq tc_j, \ \forall j\in L'.
\]
\item[Arborescence Criteria:]
The set $T$ must be an arborescence.

\end{description}

\subsection{MILP Formulation}

The constraints listed above can be encoded as the following MILP.  
\begin{subequations}
\label{MILP1}
\begin{align}
\label{OF}
&\min \sum_{p=2}^n y_p\\
&s.t.  \\
\label{XY}
&x_{ij}-y_p\leq 0,  \ &&\forall(i,j)\in S_p, \ \forall p\in L'\\
\label{COST}
&z_i+c_{ij}-z_j\leq M_{ij}(1-x_{ij}),  \ &&\forall (i,j)\in \mathcal{E}\\
\label{COST3}
&z_j\leq tc_j, \  &&\forall j\in L'\\
\label{TREE}
&\sum_{i\in L'}x_{ij}=1, \ &&\forall j\in L'   \\
\label{Boolx2}
&x_{ij}\in \{0,1\}, \ &&\forall (i,j)\in\mathcal{E}\\
\label{Booly}
&y_p\in[0,1], \ &&\forall p\in L',
\end{align}
\end{subequations}
where $M_{ij}=tc_i+c_{ij}-c_j$.  The constraints of (\ref{MILP1}) are explained as follows.

\textbf{Constraint (\ref{XY}):} If $p\not\in E$, then $y_p=0$ by definition.  Therefore, (\ref{XY}) requires that $x_{ij}=0$ for all $(i,j)\in S_p$. Alternatively, if $p\in L'$, then $y_p=1$, and $x_{ij}$ is free to take values $1$ or $0$ for any $(i,j)\in S_p$.  Thus constraint (\ref{XY}) is equivalent to the Usable Edge Criteria.

\textbf{Constraint (\ref{COST}):}
Constraint (\ref{COST}) encodes Cost Continuity Criteria. It takes a similar form to~\cite[Equation~(2.7)]{desrosiers1995time}.  Begin by noting that $M_{ij}\geq 0$ for all $(i,j)\in E$.  Indeed, for any $t\geq 1$,
\[
M_{ij}\geq c_i+c_{ij}-c_j,
\]
and $c_i+c_{ij}\geq c_j$ by the definition of an almost-metric.  Replacing the definition of $M_{ij}$ in (\ref{COST}) yields
\begin{equation}
\label{cc}
z_i+c_{ij}-z_j\leq (tc_i+c_{ij}-c_j)(1-x_{ij}).
\end{equation}
If $x_{ij}=1$, then (\ref{cc}) reduces to $z_j\geq z_i+c_{ij}$.  If, however, $x_{ij}=0$, then (\ref{cc}) reduces to $z_i-z_j\leq tc_i-c_j$ which holds trivially by constraint (\ref{COST3}) and by noting that $z_j\geq c_j, \forall j\in L$ by the definition of an almost-metric.  

\textbf{Constraint (\ref{TREE}):} Constraint (\ref{TREE}) together with constraint (\ref{COST}) yield the Arborescence Criteria.  Indeed, by Theorem 2.5 of~\cite{korte2012combinatorial}, $T$ is an arborescence rooted at $s$ if every vertex in $T$ other than $s$ has exactly one incoming edge, and $T$ contains no cycles.  The constraint (\ref{TREE}) ensures that every vertex in $L'$, which is the set of all vertices in $T$ other than $s$, has exactly one incoming edge, while constraint (\ref{COST}) ensures that $T$ has no cycles.  To see why this is true, suppose that a cycle existed in $T$, and that this cycle contained vertex $i\in L'$.  Suppose that this cycle is represented as
\[
i\rightarrow j\rightarrow \dots\rightarrow k\rightarrow  i.
\]
Note that (\ref{COST}) and the definition of an almost-metric imply that $z_i<z_j$ for any $(i,j)\in \mathcal{E}$.  Therefore, 
\[
z_i<z_j<\dots<z_k<z_i,
\]
which is a contradiction.

\textbf{Constraint (\ref{Booly}):} The variable $y_p$ is a decision variable, and therefore should take values in $\{0,1\}$ for all $p\in L'$.  However, the integrality constraint on $y_p$ may be relaxed to (\ref{Booly}).  Indeed, suppose that for any $p\in L'$, there exists an edge $(i,j)\in S_p$ such that $x_{ij}=1$.  Then, by (\ref{XY}), $y_p\geq 1$ which implies by (\ref{Booly}), that $y_p=1$.  If, on the other hand, there does not exist $(i,j)\in S_p$ with $x_{ij}=1$, then $y_p$ is free to take values in $[0,1]$.  Therefore, $y_p=0$ as the objective function seeks to minimize the sum of $y_p$ over $p$.
\\
\\
Observe that the NP-hardness of Problem~\ref{MCSP} is proved in Theorem~\ref{NPHARDPROB}, while a reduction from Problem~\ref{MCSP} to an MILP is given in \ref{MILP1}.  Therefore, a reduction of the decision version can be constructed both from and to NP-complete problems, which implies that the decision version of Problem~\ref{MCSP} is NP-complete.
\par
In the next section, the importance of minimal $t-$spanning control sets to the field of motion planning is illustrated by way of numerical examples.

\section{Simulation Results}
\label{SIMRES}
We implemented the  MILP presented in (\ref{MILP1}) in Python 3.6, and solved using Gurobi.   In this section, we begin by comparing the size of the set of $t-$spanning motion primitives determined by (\ref{MILP1}) with those obtained from the sub-optimal algorithm presented in~\cite{pivtoraiko2011kinodynamic}.  We also compare paths generated using primitives computed here with those standard primitives appearing in~\cite{SBPL}.  We conclude with an brief investigation of motion primitives in grid-based path planning.

\subsection{Comparison to Existing Primitive Generators}

 The first example we consider is the lattice
\[
L_1 = (\mathbb{Z}^2\cap [0,k]\times[-k,k])\times \{0, \pi/2, \pi, 3\pi/2\}.
\]
The lattice $L_1$ is generated by the set $B=\{(1,0,0), (1,1,\pi/2), (1,-1, 3\pi/2)\}$.  The workspace $W_d^k$ is defined as $[0,k]\times [-k,k]$.  The mobile robot is assumed to be a single point in $\mathbb{R}^2$, and swaths and costs are defined by Dubins' paths and path lengths, respectively.
\par
For the lattice $L_1$, the size of the MTSCS $|E^*|$ for varying minimal turning radii $R$, and values of $t$ and $k$ were computed using the MILP in (\ref{MILP1}).  The sizes of these control sets were compared with those obtained by employing the heuristic algorithm proposed in~\cite{pivtoraiko2011kinodynamic}.  Table~\ref{4HEAD} summarizes the findings.

\begin{table}
    \centering
     \begin{tabular}{@{} l c c c c c c c c @{}}
    \toprule
       & \multicolumn{2}{c}{$k = 3$} &  & \multicolumn{2}{c}{$k = 4$} && \multicolumn{2}{c}{$k = 7$} \\
       \cmidrule{2-3} \cmidrule{5-6} \cmidrule{8-9}
      & $|E^*|$ & $|E_P|$ && $|E^*|$ & $|E_P|$ && $|E^*|$ & $|E_P|$ \\
      \midrule
      $R=0.5$ & & & & & & & & \\
      \cmidrule{1-1}
      $t= 1.01$ & 70 & 70 && 92 & 94 && 124 & 137 \\
      $t = 1.5$ & 9  & 9  && 9  & 9  && 9   & 9   \\
      $t = 3$   & 6  & 6  && 6  & 6  && 6   & 6 \\
      \addlinespace
      $R=2$ & & & & & & & & \\
      \cmidrule{1-1}
      $t= 1.01$ & 75 & 75 && 90 & 92 && 128 & 132 \\
      $t = 1.5$ & 12  & 20 && 13  & 19  && 11   & 19   \\
      $t = 3$   & 7  & 7  && 10  & 10  && 10   & 11 \\
      \addlinespace
      $R=4$ & & & & & & & & \\
      \cmidrule{1-1}
      $t= 1.01$ & 69 & 69 && 102 & 102 && 223 & 231 \\
      $t = 1.5$ & 16  & 16 && 16  & 24  && 19   & 40   \\
      $t = 3$   & 3  & 5  && 7  & 7  && 13   & 14 \\
        \bottomrule
       \end{tabular}
        \caption{Results for lattice $L_1$.}
    \label{4HEAD}
\end{table}

The error $|E_P|-|E^*|$ can be as low as 0 in some cases.  However, for certain values of $R,t,k$, the error can be larger than $|E^*|$.  For example, when $t=1.5, R=4, k=7$, the algorithm proposed in~\cite{pivtoraiko2009differentially} returns a control set that is more than twice as large as the optimal. This will greatly slow any path planning algorithm that uses the control set $E_P$.   

For the second example we consider the lattice
\[
L_2 = (\mathbb{Z}^2\cap [0,k]\times[-k,k])\times \big\{i\frac{\pi}{4}\big\}_{i=0}^7.
\]
Observe that $L_2$ is the eight heading (cardinal and ordinal) version of $L_1$.  Table ~\ref{tab:8head} summarizes the findings for $L_2$.

  \begin{table}
      \centering
       \begin{tabular}{@{} l c c c c c @{}}
      \toprule
         & \multicolumn{2}{c}{$k = 3$} &  & \multicolumn{2}{c}{$k = 4$} \\
         \cmidrule{2-3} \cmidrule{5-6}
        & $|E^*|$ & $|E_P|$ && $|E^*|$ & $|E_P|$ \\
        \midrule
        $R=0.5$ & & & & & \\
        \cmidrule{1-1}
        $t= 1.01$ & 154 & 154 && 196 & 198 \\
        $t = 1.5$ & 19  & 19  && 19  & 19    \\
        $t = 3$   & 10  & 12  && 10  & 12  \\
        \addlinespace
        $R=2$ & & & & &  \\
        \cmidrule{1-1}
        $t= 1.01$ & 159 & 159 && 214 & 222 \\
        $t = 1.5$ & 34  & 50 && 31  & 49   \\
        $t = 3$   & 15  & 22  && 19  & 23 \\
        \addlinespace
        $R=4$ & & & & & \\
        \cmidrule{1-1}
        $t= 1.01$ & 147 & 147 && 226 & 232  \\
        $t = 1.5$ & 44  & 44 && 50  & 68   \\
        $t = 3$   & 5  & 18  && 11  & 20 \\
          \bottomrule
         \end{tabular}
         \caption{Results for lattice $L_2$.}
          \label{tab:8head}
  \end{table}

Observe that for $k=3, R=4, t=3$, the size of the set $E_P$ is over 3 times that of $E^*$.  Tables~\ref{4HEAD}, and \ref{tab:8head} imply that there are several instances of mobile robot and desired configurations of the robot which are detrimental to the algorithm proposed in~\cite{pivtoraiko2011kinodynamic}.   This algorithm, which represents the state of the art on the subject of minimal $t-$spanning control set generation, produces control sets that are at times several times larger than the minimal $t-$spanning control set.  Moreover, it is not obvious, given an instance of problem \ref{MCSP}, when the error $|E^*|-|E_P|$ will be small.  The runtime to compute each MILP-based spanning set in Tables \ref{4HEAD}, \ref{tab:8head}  ranged from a few seconds to on the order on an hour.

\subsection{Path Planning Comparison}

A standard set of motion primitives can be found in the ROS package SBPL~\cite{SBPL}.  For a minimum turning radius $R=0.5$, 8 headings (cardinal and ordinal), and cost function given by the Dubins' distance, 40 start-goal path planning problems were created.  These problems are comprised of a randomly generated goal location and set of obstacles.  Each problem was solved using the same A* implementation.  This A* algorithm operates by expanding search nodes starting at $s=(0,0,0)$.  The neighbors of each search node are are determined by concatenating the node with each of the primitives.  A concatenation $i\cdot j$ is deemed valid if and only if the $x$ and $y$ coordinates of $i\cdot j$ are both integers. 
\begin{figure}
    \centering
    \begin{subfigure}[b]{0.5\textwidth}
        \includegraphics[width=\textwidth]{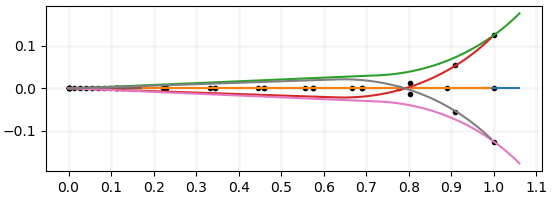}
        \caption{SBPL primitives.}
        \label{ROSPRIMS}
    \end{subfigure}\\[1em]
    ~ 
    \begin{subfigure}[b]{0.3\textwidth}
        \includegraphics[width=\textwidth]{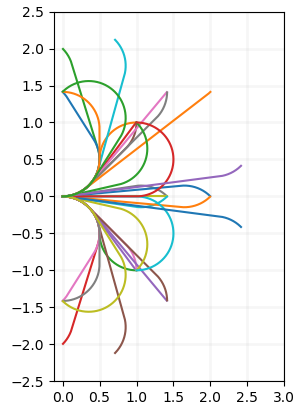}
        \caption{MILP primitives.}
        \label{MILPPRIMS}
    \end{subfigure}
   \caption{(a): Motion primitives found in~\cite{SBPL}.  (b): Motion primitives determined from MILP (\ref{MILP1})}
    \label{PRIMSEX}
\end{figure}
The primitives used are illustrated in Figure~\ref{PRIMSEX}. Figure~\ref{ROSPRIMS} illustrates the 8 primitives proposed in~\cite{SBPL} (note that some primitives are not visible as they are short).  The 33 primitives in \ref{MILPPRIMS} were obtained by solving the MILP in (\ref{MILP1}) for the lattice $L^k(B)$ generated by
\begin{equation}
    \begin{split}
   B_{\text{MILP}} = \{&(1,0,0), (1,1,0), (1, -1, 0), (1,1,\pi/4),\\
   &(1,-1,7\pi/4), (1,1,\pi/2), (1,-1, 3\pi/2),\\
   &(\sqrt{2}, 0, 7\pi/4), (\sqrt{2}, 0, 0), (\sqrt{2}, 0,\pi/4)  \}
    \end{split},
\end{equation}
and a value of $t=1.4$.  The primitives obtained from the MILP consistently outperformed the standard primitives. We use a ratio of run times ($T_{\text{MILP}}/T_{\text{ROS}}$) of the A* algorithm as a metric for comparing the time efficiency of the two primitive sets, while a ratio of path lengths ($L_{\text{MILP}}/L_{\text{ROS}}$) provides a basis for comparison of performance.  Of the 40 randomly generated maps, the average length and time ratios are
\[
\left(\frac{L_{\text{MILP}}}{L_{\text{ROS}}}\right)_{\text{avg}} = 0.897, \; \text{and} \;\ \left(\frac{T_{\text{MILP}}}{T_{\text{ROS}}}\right)_{\text{avg}} = 0.267.
\]
 The average path length using the MILP-obtained primitives was $\approx 90\%$ as long as those for the ROS package primitives and took $\approx 27\%$ of the time to calculate.  In fact, in each of the 40 maps the MILP-obtained primitives consistently outperformed the ROS package primitives for both metrics.  Figure~\ref{PAHEX} presents three example maps and paths.  We notice that in addition to shorter faster solutions, the MILP-generated primitives also result in smoother paths with fewer turns.  This fact will facilitate any further smoothing that is required.

\begin{figure}
    \centering
    \begin{subfigure}[b]{0.45\linewidth}
        \includegraphics[width=\textwidth]{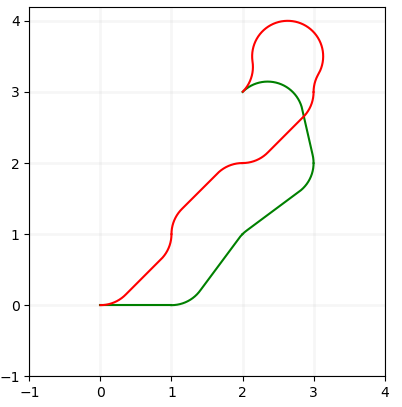}
        \caption{Zero obstacles.}
        \label{Zero obstacles}
    \end{subfigure}
    ~ 
    \begin{subfigure}[b]{0.45\linewidth}
        \includegraphics[width=\textwidth]{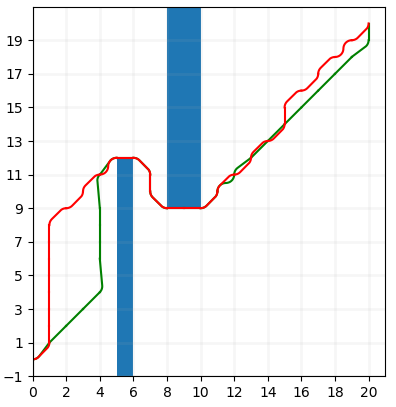}
        \caption{Two obstacles.}
        \label{Tow obstacles}
    \end{subfigure}\\[1em]
    \begin{subfigure}[b]{0.45\linewidth}
        \includegraphics[width=\textwidth]{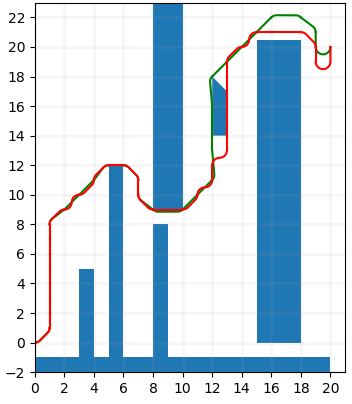}
        \caption{Seven obstacles.}
        \label{Seven obstacles}
    \end{subfigure}
   \caption{Example paths for two sets of motion primitives. Red lines represent paths generated by standard motion primitives (see Figure \ref{ROSPRIMS}).  Green lines represent paths generated using MILP-obtained motion primitives (see Figure \ref{MILPPRIMS}).}
    \label{PAHEX}
\end{figure}

\subsection{Motion Primitives Euclidean Lattices}

Another common problem is that of planning shortest paths in occupancy grids.  The error between the shortest grid path and the optimal path depends on the number of neighbors considered for each gridpoint in the search.  In~\cite[Figure 4]{nash2013any}, the authors present 4 and 8 neighbor grids in two dimensions, and 6 and 26 neighbor grids for three dimensions.  They also provide error results in the form of $t$-values for these grid choices~\cite[Table 1]{nash2013any}.  For example, in the two dimensions, the 4 neighbor grid provides a $t$ of $\approx 1.4$, while the 8 neighbor grid provides a $t$ of $\approx 1.08$.  Given that a $k\times k$ cubic grid in $d$ dimensions can be modelled as a lattice of the form $L=\mathbb{Z}^d\cap [-k, k]^d$, generated by the canonical basis of $\mathbb{R}^d$, we can, for a given $t\geq 1$, determine a minimal set of $t-$spanning motion primitives using the MILP in (\ref{MILP1}).  In Figure~\ref{EUC}, we show the first quadrant of the neighbors needed to achieve $t$ values of 1.0274, which results in a 16 neighbor grid, 1.0131, which results in a 24 neighbor grid, and 1.0124, which results in a 36 neighbor grid. These solutions extend the results in~\cite{nash2013any}.
\begin{figure}[H]
    \centering
    \includegraphics[width=0.99\columnwidth]{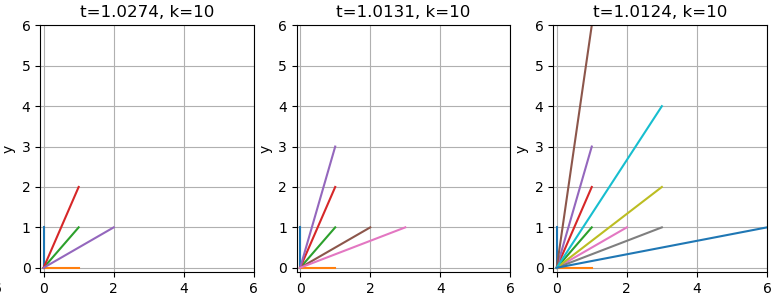}
   \caption{Euclidean lattice minimal $t-$spanners in 2D and for different values of $t$.}
    \label{EUC}
\end{figure}

\section{Conclusions and Future Work}
The numerical examples presented in Section~\ref{SIMRES} illustrate the importance of minimal sets of $t-$spanning motion primitives in robotic motion planning.  The MILP formulated in (\ref{MILP1}) represents the only known non-brute force approach to calculating \textit{exact} minimal $t-$spanning motion primitives. Though solutions to (\ref{MILP1}) in general cannot be obtained in time polynomial in the size of the input lattice, motion primitives are generally calculated once, offline.  
\par
Observe that while minimal $t-$spanning motion primitives for a given lattice may be calculated using the MILP formulation in (\ref{MILP1}), the choice of lattice appears to be equally important to the time and length efficiency of path planning problems.  The correct choice of lattice for a given mobile robot is a subject of future work. 
\par
There are lattices for which a solution to Problem~\ref{MCSP} may be efficiently obtained.  Indeed, it can be shown that Problem \ref{MCSP} for a Euclidean lattice with convex workspace is efficiently solvable regardless of the dimension.  Observe that the proof of Theorem \ref{NPHARDPROB}, in which it is established that Problem \ref{NPHARDPROB} is NP-hard, relies heavily on the potential non-convexity of the lattice workspace.  The conditions under which Problem~\ref{MCSP} can be efficiently solved is still an open question.

\bibliographystyle{IEEEtran}


\end{document}